\newtheorem{lemma}{Lemma}
\newtheorem{definition}{Definition}
\DeclareMathOperator*{\argmax}{argmax}
\DeclareMathOperator*{\argmin}{argmin}
\newcommand{\MaxCE}{MaxCE}
\renewcommand{\[}{\big[}
\newcommand{\kld}[2]{\mathrm{D_{KL}}\left(#1 \,\|\, #2\right)}
\renewcommand*{\eqref}[1]{Eq.~(\ref{#1})}
\newcommand*{\figref}[1]{Fig.~\ref{#1}}
\newcommand*{\secref}[1]{Sec.~\ref{#1}}
\newcommand*{\expectcustom}[4][]{#1#2[\hspace{#3}#2[#4#2]\hspace{#3}#2]}
\newcommand*{\expectBig}[2][]{\expectcustom[#1]{\Big}{-0.32em}{#2}}
\newcommand*{\eexpectBig}[1]{\expectBig[\mathbb{E}]{#1}}
\newcommand*{\Dklbig}[2]{\mathrm{D_{KL}}\big[#1 \,\|\, #2\big]}
\newcommand*{\HHbig}[1]{\mathrm{H}\big[#1\big]}
\begin{document}

\title{The Advantage of Cross Entropy over Entropy in Iterative Information
Gathering}
\author{Johannes Kulick \and Robert Lieck \and Marc Toussaint}

\maketitle

\begin{abstract}
  Gathering the most information by picking the least amount of data is a common
  task in experimental design or when exploring an unknown environment in
  reinforcement learning and robotics. A widely used measure for quantifying the
  information contained in some distribution of interest is its
  entropy. Greedily minimizing the expected entropy is therefore a standard
  method for choosing samples in order to gain strong beliefs about the
  underlying random variables. We show that this approach is prone to temporally
  getting stuck in local optima corresponding to wrongly biased beliefs. We
  suggest instead maximizing the expected cross entropy between old and new
  belief, which aims at challenging refutable beliefs and thereby avoids these
  local optima. We show that both criteria are closely related and that their
  difference can be traced back to the asymmetry of the Kullback-Leibler
  divergence. In illustrative examples as well as simulated and real-world
  experiments we demonstrate the advantage of cross entropy over simple entropy
  for practical applications.
\end{abstract}

Information gain $\cdot$ Experimental design $\cdot$ Exploration
$\cdot$ Active learning $\cdot$ Cross entropy $\cdot$ Robotics


\section{Introduction}

When gathering information, agents need to decide where to sample new data.  For
instance, an agent may want to know the latent parameters of a model for making
predictions or it has a number of possible hypotheses and wants to know which
one is true. If acquiring data is expensive, as it is the case in real-world
environments or if a human expert answers queries, it is desirable to use the
least amount of data for gathering the most information possible. An agent
therefore should choose queries most informative for its learning
progress---which is referred to as \textit{active learning} and
\textit{experimental design}.

The commonly addressed task in these areas is to reduce the \textit{predictive
  uncertainty}, that is, to choose queries as efficiently as possible with the
aim of reducing the prediction errors of the model. In this paper, however, we
also consider a slighly different task, namely to reduce the uncertainty over
some \textit{(hyper) parameter} of the model which is not observable. In the
generative model \figref{fig:generative-model}, reduction of predictive
uncertainty aims at learning the function $f$, whereas the alternative task is
to learn about the parameter $\theta$.

A widely used approach for minimizing uncertainty over the hyper parameters is
to greedily minimize the expected entropy of the posterior distribution
$p(\theta|x,y,D)$. However, we can show that this greedy method can get trapped
in \emph{erroneous} low-entropy beliefs over $\theta$. We therefore suggest an
alternative measure: maximizing the expected cross entropy between the prior
$p(\theta|D)$ and the posterior $p(\theta|x,y,D)$. Although also being a
one-step criterion, our cross entropy criterion avoids local optima in cases
where the standard entropy criterion gets trapped. We demonstrate superior
convergence rates in empirical evaluations. We show that the difference between
the two criteria can be traced back to the asymmetry of the Kullback-Leibler
divergence (KL-divergence) and discuss this in detail. Furthermore, we show that
even in the standard case of reducing predictive uncertainty our criterion can
be used to improve the convergence rate by combining it with standard
uncertainty sampling.

In general, computing optimal solutions to experimental design problems requires
taking all possible future queries into account. This translates to solving a
partially observable Markov decision problem (POMDP) \citep{chong2009partially},
which generally is unfeasible to compute (see
e.g. \cite{kaelbling-et-al:98-aij}). In the special case of submodular objective
functions greedy one-step optimization has bounded regret with respect to the
optimal solution \citep{nemhauser1978analysis}. However, we show that the
standard expected entropy criterion for the $\theta$-belief is \emph{not}
submodular. Therefore, the naive greedy criterion of reducing $\theta$-belief
entropy is not guaranteed to have bounded regret, which is consistent with out
empirical finding that it can get trapped in erroneous low-entropy belief
states. Our cross entropy criterion, which measures \emph{change} in belief
space rather than entropy reduction, is less prone to getting trapped.

In the remainder of this paper we will first discuss related work. We then
formally introduce our method \emph{\MaxCE{}} and discuss the difference to the
standard approach of minimizing expected entropy. After this we draw the
connection to active learning methods for reducing predictive uncertainty. We
empirically evaluate our method in a synthetic regression and classification
scenario as well as a high-dimensional real-world scenario comparing it to
various common measures. We then show how \MaxCE{} can be used
in a robotic exploration task to guide actions. Finally we discuss the results
and give an outlook to future work.

\tikzstyle{plain_edge} = [->,shorten >=.1pt, >=stealth]
\tikzstyle{dashed_edge} = [dash pattern=on 3pt off 1pt            ,->,shorten >=.1pt, >=stealth]
\tikzstyle{dotted_edge} = [dash pattern=on \pgflinewidth off 2.5pt,->,shorten >=.1pt, >=stealth]
\tikzstyle{node} = [minimum size=24pt]

\begin{figure}[t]
  \centering
  \begin{tikzpicture}
    \node (y_stern)    at (-1,-2) [node,shape=circle,draw] {$y$};
    \node (x_stern)    at (-2,-1) [node,shape=circle,draw] {$x$}
    edge [plain_edge] (y_stern);
    \node (D)     at (1, -2) [node,shape=circle, draw] {$D$}
    edge [dashed_edge] (y_stern);
    \node (f)     at (0, -1.3) [node,dash pattern=on \pgflinewidth off 2.5pt,shape=circle,draw] {$f$}
    edge [dotted_edge] (D)
    edge [dotted_edge] (y_stern);
    \node (theta)     at (0, 0) [node,shape=circle,draw] {$\theta$}
    edge [dashed_edge] (y_stern)
    edge [dashed_edge] (D)
    edge [dotted_edge] (f);

  \end{tikzpicture}
  \caption{Generative model of the data as Bayesian network. The already known
    data $D$ as well as the new query $x$ and label $y$ are drawn from a
    distribution $f$ that is determined by (hyper) parameters $\theta$. Dashed
    arcs describe the dependencies after marginalizing out $f$, dotted arcs
    describe the dependencies before that.}
  \label{fig:generative-model}
\end{figure}
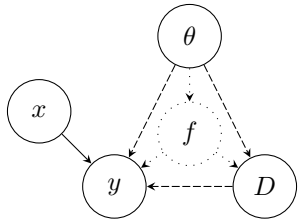

\section{Related Work}
\label{sec:rel_work}

Our method is closely related to \emph{Bayesian experimental design}, where
Bayes\-ian techniques are used to optimally design a series of experiments. The
field was coined by \cite{lindley_measure_1956}.
\cite{chaloner-verdinelli:95-jss} give an overview of the method and its various utility
functions. An experiment, possibly consisting
of several measurements, in this context can be seen as a single sample taking
in some parameter space. The classic utility function is to maximize the
expected Shannon information \citep{shannon-48} of the posterior over a latent
variable of interest, which corresponds to maximizing the expected neg.~entropy
of the posterior or equivalently the expected KL-divergence from posterior to
prior (see \secref{sec:expected_entropy_versus_cross-entropy}). Our \MaxCE{}
method is closely related in that maximizing the expected cross entropy also
corresponds to maximizing the expected KL-divergence but \emph{from prior to
  posterior}, that is, in the opposite direction. As a consequence, while the
traditional experimental design objective is not well suited for greedy
iterative optimization as it may get stuck in local optima (see
\secref{sec:min-entropy}) our \MaxCE{} criterion overcomes this flaw while
retaining the desired property of converging to low-entropy posteriors. Bayesian
experimental design recently has regained interest due to an efficient
implementation, the Bayesian Active Learning by Disagreement (BALD) algorithm
\citep{houlsby-et-al:arxiv-2011}, which exploits the equivalence of the
experimental design criterion to a mutual information in order to make the
computations tractable.

The general approach to minimize the number of queries for a learning task is
often called \textit{active learning}. As a general framework, active learning
comprises a variety of methods (see \cite{settles:12-book} for an overview) and
is successfully used in different fields of machine learning and for a wide
range of problems, as shown in a survey of projects using active learning
\citep{tomanek-olsson:09-alnlp}. However, it mainly focuses on reducing
predictive uncertainty (predictive error or predictive entropy) for a single
model, whereas our method aims at learning hyper parameters such as selecting
the correct model out of several possible candidates.

Model selection techniques on the other side mainly focus on criteria to
estimate the best model (or hypothesis) given a set of training data, which can
also be seen as learning of latent parameters of a model. Well known criteria
are Akaikes Information Criterion (AIC)
\citep{akaike:74-ac,burnham-anderson:04-smr} or the Bayesian Information
Criterion (BIC) \citep{schwarz:78-as,bhat-kumar:10-tr}. Both are based on the
likelihood ratios of models and are approximations of the distribution over the
latent variable.  Specifying concrete likelihood models we can infer the
distribution of the variable of interest directly and do not need to approximate
them. In certain cases it might however be useful to apply approximations to
speed up the method.  Another approach to rate a model is cross-validation (see
e.g.~\cite{kohavi:95-ijcai}), which statistically tests models with subsets of
the training data for their generalization error.  All model selection
techniques have in common that they measure the quality of a model given a data
set. They are not often used for actively sampling queries and in the case of
predictive error, this might actually fail for the ``Active Learning with Model
Selection Dilemma'' \citep{sugiyama-rubens:08-icdm}. We observe a similar
problem, when measuring predictive error in our experiments (see
\secref{sec:experiments}). Our methods on the other side is developed for
actively choosing queries.

\emph{Query-by-committee (QBC)} as introduced by \cite{seung-et-al:92-colt}
tries to use active learning methods for
\emph{version space} reduction. The version space is the space of competing
hypothesis. QBC finds new samples by evaluating
the disagreement within a set of committee members (that is, different
hypotheses)
concerning their predictions. These samples are then used to train the
different models. In a binary classification scenario disagreement is
easy to determine. In multi-class or regression scenarios it is harder
to define. One approach, as suggested by
\cite{mccallum-nigam:98-icml}, is to use as measure of disagreement
the sum of KL-divergences from each committee member's
predictive belief to the mean of all committee member's
predictive belief.
While QBS amis at
finding the correct hypothesis, it still focuses on the prediction error. We
will empirically compare our approach to QBC.

Another variant of active learning are expected model change methods such as the
expected gradient length algorithm \citep{settles-et-al:08-nips}. These methods
measure the change a model undergoes by adding another observation. Our method
is in spirit related and might arguably be classified as a variant of expected
model change since we are also interested in finding samples that contain a
maximum amount of information with respect to the model. However, we apply the
idea of greatest model change directly to the \emph{distribution} of hypotheses
by measuring the KL-divergence between the distribution before and after new
observations have been incorporated. In contrast, existing methods stay within
one fixed model and measure the change of this fixed model. Those methods can
thus not directly be used for discriminating between hypotheses.

In our work on joint dependency structure exploration
\citep{kulick-et-al:15-icra}, we used our method. There we focused on modeling the
joint dependency structure. We analyze these experiments with respect to the
MaxCE method further in Sec.~\ref{sec:robot-experiment}.

Similar to our robot experiment is the work of \cite{hausman2015active}.  They
state that the KL divergence is the information gain about a distribution, but
turn it around without further explanation and analysis. In this way, they
implemented our MaxCE criterion, as we will show. Their results support our
finding that MaxCE is an improvement above traditional Bayesian experimental
design.

For some experiments we use Gaussian Processes for regression and
classification.  See \cite{rasmussen-williams:06-book} for an
extensive introduction.

\section{Information Gathering Process}

Let $\theta$, $x$, $y$, $D$, and $f$ be random variables. $\theta$ denotes a
latent random variable indicating the hypothesis of interest, e.g. model class,
hyper parameter or other latent parameter. Conditional to $\theta$ we assume a
distribution over functions $f$, for instance a Gaussian Process prior,
generating the observable data. (In the
classification case these are discriminative functions.) $D$ are the data
observed so far consisting of $(x_i,y_i)$ input-output pairs where
$P(y_i|x_i,f)$ depends on $f$. $x$ is the input that is to be chosen actively
and $y$ is the corresponding output received in response. The graphical model in
\figref{fig:generative-model}, neglecting the dashed arcs, describes their
dependence. For gathering information over $\theta$ we will have to express the expected
information gain about $\theta$ depending on $x$ and usually eliminate $f$. In
the graphical model, after eliminating $f$, the dashed arcs describe the
dependencies.

The task now is to gather the most information with the least queries.
If we assume a ground truth distribution $P(\theta^*)$, we can formulate this
task for a given horizon of $K$ queries as to
minimize the KL-divergence between $P(\theta^*)$ and the posterior over $\theta$.

\begin{align}
  (x_1^*, \dots, x_K^*) = \argmin_{(x_1, \dots, x_K)} \kld{P(\theta^*)}{P(\theta
  | D)} 
\end{align}
with $D = \{(x_1, y_1), \dots, (x_K, y_K)\}$.

But unfortunately the distribution $P(\theta^*)$ is unknown and is in fact the
desired piece of information we want to infer. Thus we can immanently not
compute this KL-divergence. In many cases it is however reasonable to assume
$P(\theta^*)$ having a low entropy. This assumption holds e.g.~for the case of a
single ``true'' hypothesis or a single ``true'' value of a measured constant. Under
these circumstances it is reasonable to minimize the entropy of $P(\theta|D)$

\begin{align}
  (x_1^*, \dots, x_K^*) = \argmin_{(x_1, \dots, x_K)} H\left[P(\theta |
  D)\right] 
  \label{eq:task}
\end{align}
with $D = \{(x_1, y_1), \dots, (x_K, y_K)\}$.

This still includes reasoning over all $K$ future queries and is generally
computationally intractable. Thus the scenario we describe here will be
iterative.  The choice of a new query point $x$ is guided by an objective
function that scores all candidate points. The candidate with an optimal
objective value is then used to generate a new data point $(x,y)$ which, for the
next iteration, is added to $D$.

\subsection{Expected Entropy versus Cross-Entropy}
\label{sec:expected_entropy_versus_cross-entropy}

From \eqref{eq:task} and the general task to gather information it is very
intuitive to minimize expected hypotheses entropy\footnote{Note that this is
often called
the \emph{conditional entropy} and written $H(\theta|y)$. To avoid confusion, we
will not use this shorthand notation but explicitly state the
distribution we take the entropy of and over what random variable we will take
the expectation over. Also see Sec.~\ref{sec:submodular}.} in each step. This is a common utility function for Bayesian
experimental design \citep{chaloner-verdinelli:95-jss}:
\begin{align}
x_{NE}
&= \argmax_x \int_y - p(y|x,D) H[p(\theta|D,x,y)] ~. \label{eq:neg_entropy}
\end{align}
It is very instructive to rewrite this same criterion in various ways. We can
for instance subtract $H[p(\theta|D)]$, as it is a constant offset to the
maximizing operator (see App.~\ref{sec:dkl-transformations} for the detailed
transformations):
\begin{align}
  &\hphantom{=~} \argmax_x \int_y - p(y|x,D) H[p(\theta|D,x,y)] \\
 &= \argmax_x - \int_y p(y|x,D)~ H[p(\theta|y,x,D)] - H[p(\theta|D)] \label{eq:exp_entropy} \\
 &= \argmax_x \int_y p(y|x,D)~ \kld{p(\theta|y,x,D)}{p(\theta|D)} ~. \label{eq:expected_KLD}
\end{align}
These rewritings of expected entropy establish the direct
relation to Eq.~(3) and (4) in \cite{chaloner-verdinelli:95-jss}. We
find that $x_{NE}$ can be interpreted both as maximizing the expected neg.~entropy,
as in~\eqref{eq:neg_entropy}, or maximizing the expected KL divergence,
as in~\eqref{eq:expected_KLD}.

Minimizing the expected model entropy is surely one way of maximizing
information gain about $\theta$. However, in our iterative setup we
empirically show that this criterion can get stuck in local optima:
Depending on the stochastic sample $D$, the hypotheses posterior $p(\theta|D)$
may be ``mislead'', that is, having low entropy while giving high
probability to a wrong choice of $\theta$. The same situation arises when having
a strong prior belief over $\theta$, which is a common technique in Bayesian
modeling for incorporating knowledge about the domain. The knowledge is formalized
as probability distribution over the possible outcomes, as for instance
in our robotic experiments in Sec.~\ref{sec:robot-experiment}.
As detailed below, the attempt
to further minimize the entropy of $p(\theta|D,x,y)$ in such a situation
may lead to suboptimal choices of $x_{NE}$ that confirm the current
belief instead of challenging it.

This is obviously undesirable, instead we want a robust belief that cannot be
changed much by future observations. We
therefore want to induce the biggest change possible with every added
observation. In that way, we avoid local minima that occur if a belief is wrongly
biased. While minimizing the entropy would in this situation avoid observations
that change the belief, measuring the change of the belief regards an increase
of entropy as a desirable outcome.

\begin{figure}[t]
  \centering
  \includegraphics[width=\columnwidth]{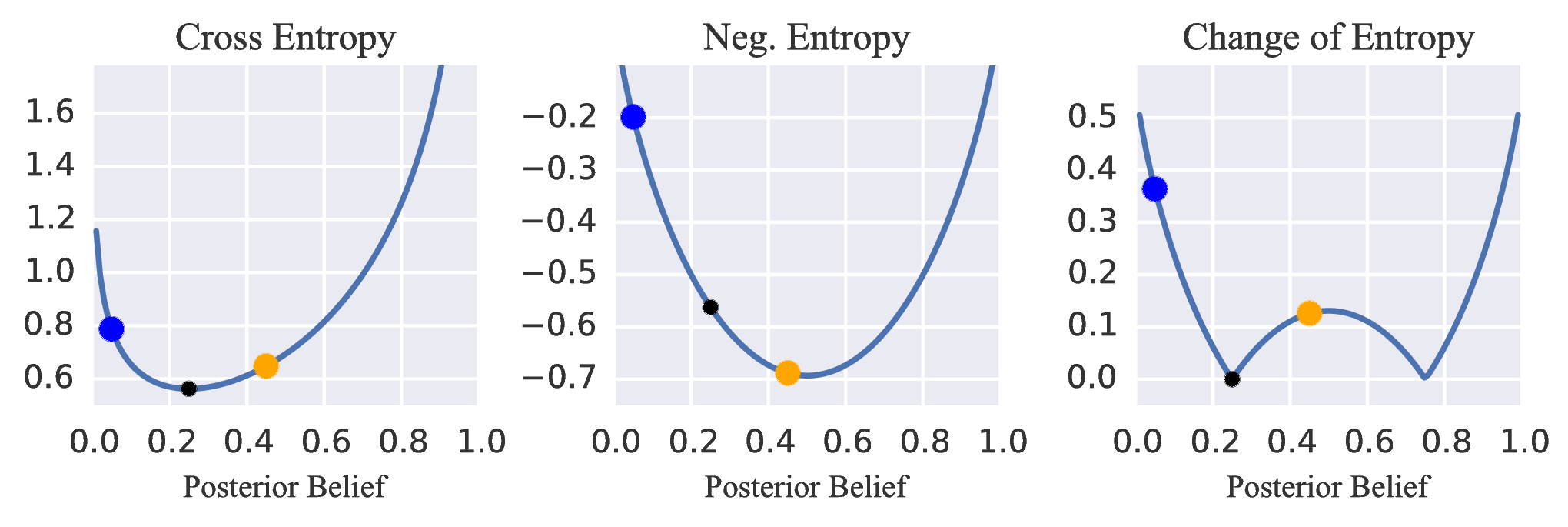}
  \caption{Characteristics of three different criteria for choosing samples:
    Cross entropy, neg.~entropy, and change of entropy. The belief is over a
    binary variable. The black dot indicates the prior belief of 0.25, the blue
    and yellow dot indicate the posterior after having seen two different
    observations. \textbf{Cross entropy} regards a change in any direction as an
    improvement. \textbf{Neg.~entropy}, in contrast, prefers changes that
    support the current belief over those that challenge it -- unless the
    posterior belief flips to having an even lower entropy than the
    prior. \textbf{Change of entropy} is similar to cross entropy in that for
    small changes it regards any direction as an improvement. For larger
    changes, however, is has a local optimum for a flat posterior of 0.5 and a
    local minimum for a flipped posterior with the same entropy as the prior.}
  \label{fig:d4huJ7eHbT}
\end{figure}

While a naive approach could be to maximize the expected change of the entropy
\begin{align}
  x^* &= \argmax_x \int_y p(y|x,D) \, \Big| H[p(\theta|y,x,D)] - H[p(\theta|D)]
  \Big|
\end{align}
this criterion has two undesirable pathologies (1)~it always has a local maximum
for a flat posterior belief with maximum entropy---unless the prior is already
flat---and (2)~changing a strong belief, say 0.25/0.75 for a binary hyper
parameter, to the equally strong but contradictory belief of 0.75/0.25 is one of
the global minima with zero change of entropy (see \figref{fig:d4huJ7eHbT}).

Another criterion that measures the change of the belief is the cross entropy
between the current and the expected belief.
This can be seen in \figref{fig:zrJICWGCGP}. Whereas the neg.~entropy is the
same for all prior beliefs, the cross entropy is high, when prior and posterior
belief disagree.
Intuitively neg.~entropy actually does not measure the \emph{change} of
distributions, but only the information of the posterior belief $p(\theta|D,x,y)$.

We therefore propose the \emph{\MaxCE{}} strategy which maximizes the expected
\emph{cross entropy} between the prior hypotheses belief $p(\theta|D)$ and
the posterior hypotheses belief $p(\theta|D,x,y)$.  This, again, can be
transformed to maximizing the KL-divergence, but now with switched
arguments (see again App.~\ref{sec:dkl-transformations} for details).

\begin{align}
x_{CE}
 &= \argmax_x \int_y p(y|x,D)~ H[p(\theta|D);p(\theta|D,x,y)] \label{eq:expected_cross_entropy}\\
 &= \argmax_x \int_y p(y|x,D)~ \kld{p(\theta|D)}{p(\theta|D,x,y)}, \label{eq:expected_inverse_KLD}
\end{align}
where $H[p(z),q(z)] = -\int_z p(z) \log q(z)$.

The KL-divergence
$\kld{p(\theta|D)}{p(\theta|D,x,y)}$ literally quantifies the additional
information captured in $p(\theta|D,x,y)$ relative to the previous
knowledge $p(\theta|D)$.

This does not require the entropy to decrease: the expected divergence $\kld{p(\theta|D)}{p(\theta|D,x,y)}$ 
can be high even if the expected entropy of the distribution $p(\theta|D,x,y)$ is higher than $H[p(\theta|D)]$---so
our criterion is not the same as minimizing expected model entropy. In
comparison to the KL-divergence formulation \eqref{eq:expected_KLD} of
expected entropy the two arguments are switched. The following
example and the later quantitative experiments will demonstrate the
effect of this difference.

\subsection{ An Example for Maximizing Cross Entropy Where Minimizing Entropy Gets Trapped}
\label{sec:min-entropy}

Bayesian experimental design suggests to minimize the expected entropy of the
model distribution \eqref{eq:exp_entropy}.  As we stated
earlier, this may lead to getting stuck in local optima for an iterative scenario.
We now explicitly show an example of such a situation. Assume a regression
scenario where two GP hypotheses should approximate a ground truth function.
Both GPs use a squared exponential kernel, but have a different length scale
hyper parameter. One of these GPs is the correct underlying model.

\begin{figure}[t]
  \centering
  \includegraphics[width=0.8\columnwidth]{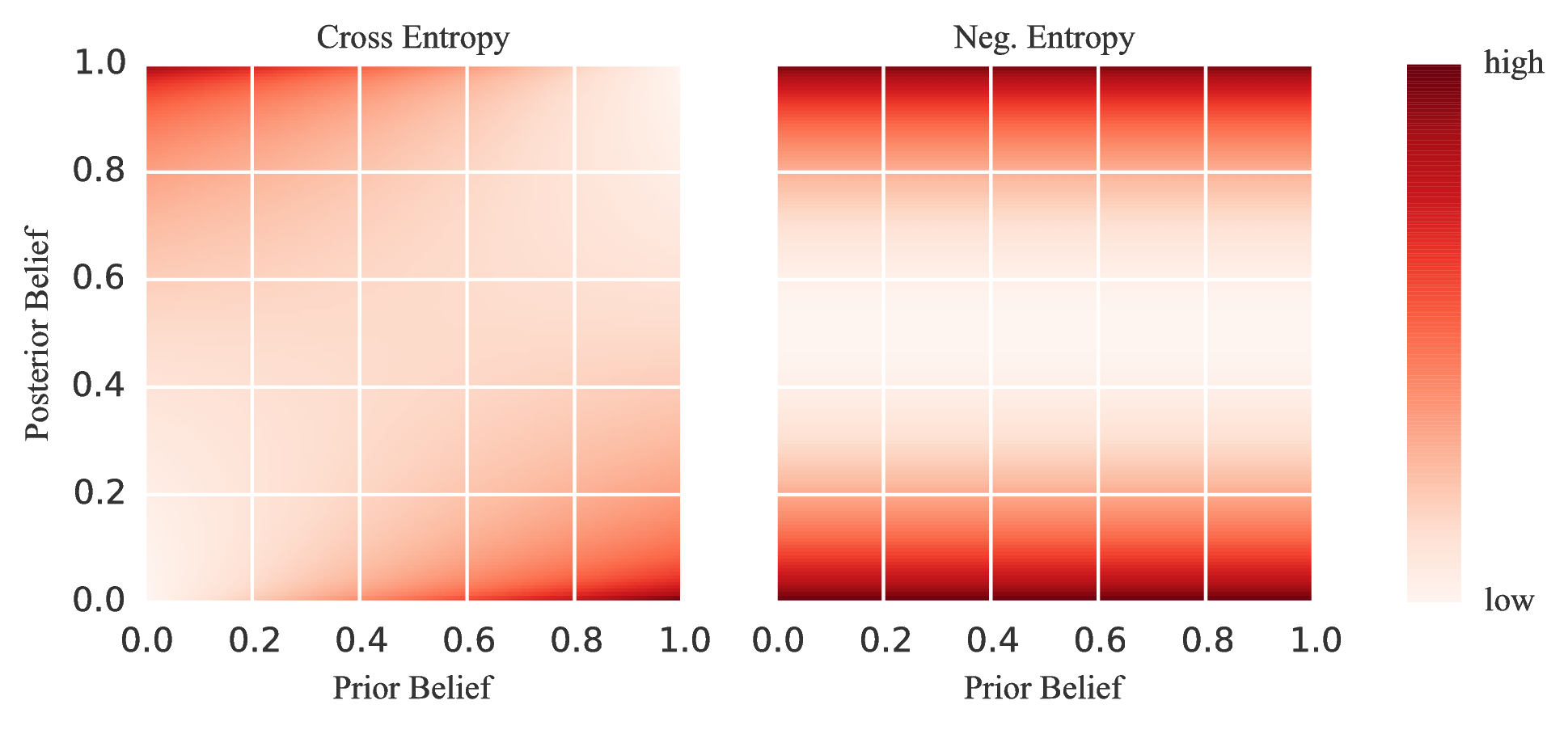}
  \caption{Cross entropy and neg.~entropy as a function of prior and posterior
    belief of two possible hypotheses. Values are normalized and the axes show
    the probability of one of the two hypotheses. Maximizing the cross entropy
    prefers a high entropy only if reached by a \emph{change} of the belief
    while maximizing the neg.~entropy ignores the prior belief.}
  \label{fig:zrJICWGCGP}
\end{figure}

\begin{figure}[th]
  \centering
  \includegraphics[width=.6\columnwidth]{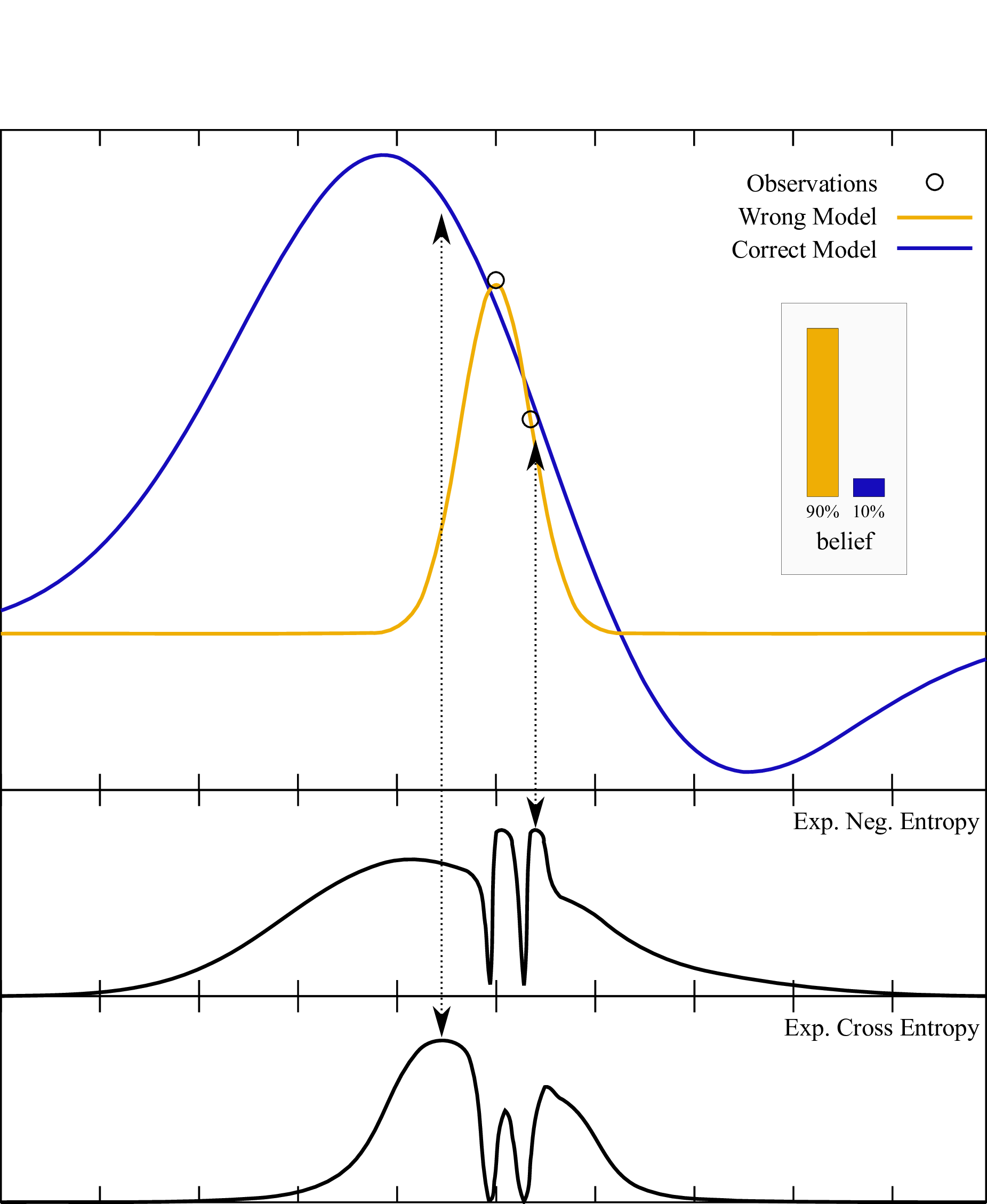}
  \caption{
  \label{fig:missleading}
  The top graph shows two competing hypotheses where one corresponds to the
  correct model (the Gaussian process the data are actually drawn from) and the
  other is wrong (a Gaussian process with a narrower kernel). For the two
  observations seen so far, the current belief is biased towards the wrong model
  because it is the more flexible one. The two curves below correspond the
  expected neg.~entropy \eqref{eq:exp_entropy} and the expected cross
  entropy \eqref{eq:expected_KLD} of the belief after performing a query at
  the corresponding location. The arrows indicate the query location following
  each of the two objectives.}
\end{figure}

Consider now a case where the first two observations by chance support the wrong
hypothesis. This may happen due to the fact that the ground truth function that
is actually sampled is itself only a random sample from the prior over all
functions described by the underlying GP. Furthermore observations may be noisy,
which may lead to a similar effect. Such a scenario---one that actually occurred
in our experiments---is shown in \figref{fig:missleading}. The probability for
the wrong model in this scenario is already around $90\%$. If we now compute the
expected neg.~entropy from \eqref{eq:exp_entropy} it has its maximum very close
to the samples we already got. This is due to the fact that samples possibly
supporting the other---the correct---model would temporarily decrease the
neg.~entropy. It would only increase again if the augmented posterior
actually flipped and the probability for the correct model got higher then
$90\%$.

The \MaxCE{} approach of maximizing cross entropy (see
\eqref{eq:expected_cross_entropy}) on the other hand favors
\emph{changes} of the hypotheses posterior in any direction, not only to
lower entropy, and therefore recovers much faster from the misleading
first samples. \figref{fig:missleading} shows both objectives for this
explicit example.

\subsection{The Conditional (Posterior) Hypotheses Entropy is not
  Submodular}
\label{sec:submodular}

While at the first glance this might contradict the finding that the entropy is
submodular \citep{fujishige1978polymatroidal} and optimizing submodular
functions can be done efficiently \citep{nemhauser1978analysis, iwata2001combinatorial},
we want to assure that it does not interfere with these facts. The
submodular entropy function is a set function on set of random variables, where
the entropy of the joint distribution of all variables in the set is computed.
Formally if $\Omega = \{V_1, \dots, V_n\}$ is a set of random variables, than for any $S
\subseteq \Omega$ the entropy of this subset $H(S)$ is submodular.
In contrast, we compute the entropy of the distribution of a fixed random variable,
conditioned on a set of random variables (see \eqref{eq:neg_entropy}).
As noted earlier this is the conditional entropy. In
App.~\ref{sec:conditional_entropy_is_not_submodular} we proof that this is not
submodular. The conditional entropy is, however, monotone. This means the
\emph{expectation} of the entropy decreases. For particular values of the
variables in $S$ it might increase.

\section{Comparison to Active Learning Strategies}
\subsection{Hypotheses Belief and Predictive Belief}
\label{sec:MSvsPS}

As opposed to existing active learning methods we define the objective
function directly in terms of the \emph{hypotheses belief} $P(\theta|D)$
instead of the \emph{predictive belief} $P(y|x,D)$.
We call $P(\theta|D,x,y)$ the \emph{posterior} hypotheses
belief after we have seen an additional data point $(x,y)$. Accordingly we call
$P(\theta|D)$ the \emph{prior} hypotheses belief, even though it is already
conditioned on observed data. It does, however, play the role of a Bayesian
prior in computing the posterior hypotheses belief. The
relation between the hypotheses posterior and predictive belief is
\begin{align}
  \begin{split}
    \underbrace{p(\theta|D,x,y)}_{\mathrm{hypotheses~belief}}
    = \frac{p(x,y,D|\theta) p(\theta)}{p(x,y,D)}
    \propto \underbrace{p(y|x,\theta,D)}_{\mathrm{predictive~belief}}~ p(D|\theta) ~.
  \end{split}
\end{align}

Minimizing the expected entropy on the
predictive belief is the direct
translation from Bayesian experimental design to the predictive belief:

\begin{align}
  x_{SI} = \argmax_{x} \int_y p(y|x, \theta, D) H[p(y|x, \theta, D)]
\end{align}

In the case of Gaussian distributions of $P(y|x, \theta, D)$ this is the same as
minimizing the expected variance of the predictive belief, since
$H[\mathcal{N}(\mu, \sigma^2)] = \frac{1}{2}\log(2\pi e \sigma^2)$, which is a
strictly monotonically increasing function on $\sigma^2$. Minimizing the
expected mean variance over the whole predictive space is introduced as active
learning criterion by \cite{cohn-ghahramani-jordan:96-jair}.

An even simpler but widely used technique is \emph{uncertainty sampling}
\citep{lewis-gale:94-sigir}. This techniques samples at points of high
uncertainty, measured in variance or entropy of the predictive belief. No
expectation is computed here. The assumption is that samples at regions with
high uncertainty will reduce the uncertainty most \citep{sebastiani1997bayesian}.

Normally, uncertainty sampling is used to train a single model, i.e. only one
hypothesis is assumend, while for
comparing it to our methods we have to consider a set of hypotheses. The most
natural way seems to handle the set of models as a mixture model and then
minimize the variance of this mixture model
\begin{align}
  E_\theta\left[P(y|x,\theta,D) - E_\theta\left[P(y|x,\theta,D)\right] \right]
  \label{eq:us_objective}
\end{align}
where $E_\theta[\cdot]$ is the expectation over all models.

A mix between both worlds is \emph{Query-by-Committee} (QBC)
\citep{seung-et-al:92-colt,mccallum-nigam:98-icml}. While aiming at
discriminating between different hypotheses, it uses the predictive belief for
measurements. It works  as follows: QBC handles a set of hypotheses, the
committee. When querying a new sample it chooses the sample with the largest
disagreement among the committee members. These samples are considered to be
most informative, since large parts of the committee are certainly wrong.

In a binary classification scenario disagreement is easy to determine. In
multi-class or regression scenarios it is harder to define. One approach, as
suggested by \cite{mccallum-nigam:98-icml}, is to use as measure of
disagreement the sum of KL-divergences from each committee
member's predictive belief to the mean of all committee member's
predictive beliefs.
\begin{align}
  \frac{1}{\|\theta\|} \sum_{\theta} D_{KL}\left(P(y|\theta,D,x) \, \bigg\| \, \frac{\sum_\theta P(y|\theta,D,x)}{\|\theta\|}\right)
\end{align}
While they assign a uniform prior over hypotheses in every step, we can also
compute the posterior hypotheses belief and use it is to weight the average:
\begin{align}
    \sum_\theta  P(\theta|D) \cdot
    D_{KL}\!\!\left(\!P(y|\theta,D,x) \bigg\| \sum_\theta P(\theta|D) P(y|\theta,D,x)\!\right)
  \label{eq:qbc_objective}
\end{align}

\subsection{Mixing Active Learning and Information Gathering}

While measuring the expected cross entropy is a good measure to find samples
holding information about latent model parameters of competing hypothesis it
might actually not query points that lead to minimal predictive error. For
example, regions that are important for prediction but do not discriminate
between hypothesis would not be sampled. Nevertheless, information about latent
model parameters may help to increase the predictive performance as well. For our
experiments we therefore additionally tested a linear combination of the \MaxCE{}
measure $f_{CE}$ from \eqref{eq:expected_cross_entropy} with the uncertainty
sampling measure $f_{US}$ from \eqref{eq:us_objective} in a combined objective
function $f_{mix}$
\begin{align}
  f_{mix} = \alpha \cdot f_{CE} + (1-\alpha) \cdot f_{US}~. \label{eq:mixture_objective}
\end{align}

\section{Experiments: Regression and Classification}
\label{sec:experiments}

Tasks that occur in real world scenarios can often be classified as either
regression (predicting a function value) or classification (predicting a class
label) tasks. We tested both task classes on
synthetic data. The regression scenario we also tested on a real world data set.
Typically one is interested in prediction performance. However, finding the
correct hypothesis might help for that task as well as generalizing to further
situations. We tested both in our experiments.

\subsection{Compared Methods}

We compared six different strategies: our \MaxCE{}, which maximizes the expected
cross entropy (see \eqref{eq:expected_cross_entropy}); classical Bayesian
experimental design, which minimizes the expected entropy (see
\eqref{eq:neg_entropy}); query-by-committee which optimizes Kullback-Leibler to
the mean (see \eqref{eq:qbc_objective}); uncertainty sampling (see
\eqref{eq:us_objective}), and random sampling, which randomly choses the next
sample point. Additionally we tested a mixture of \MaxCE{} and uncertainty
sampling (see \eqref{eq:mixture_objective}). The mixing coefficient, which was
found by a series of trial runs, was $\alpha=0.5$ for both synthetic data sets
and $\alpha=0.3$ for the CT slices data set.

\newcommand{\figureScaling}{0.45}

\begin{figure*}\center
  \subfigure[Regression Model Entropy] {
  \includegraphics[width=\figureScaling\columnwidth]{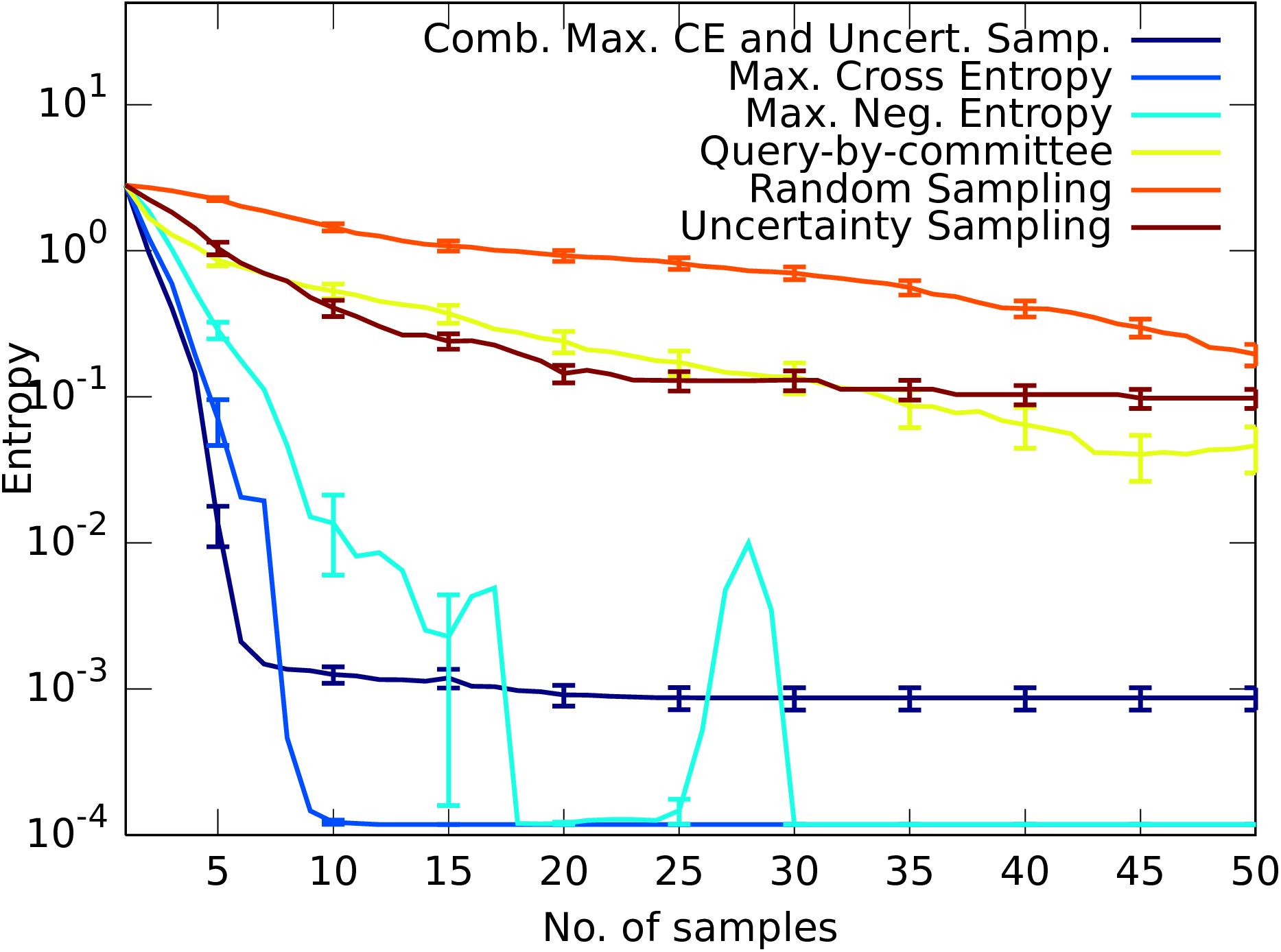}
  \label{fig:mean_performance_ent2}
  }
  \subfigure[Regression MSE] {
  \includegraphics[width=\figureScaling\columnwidth]{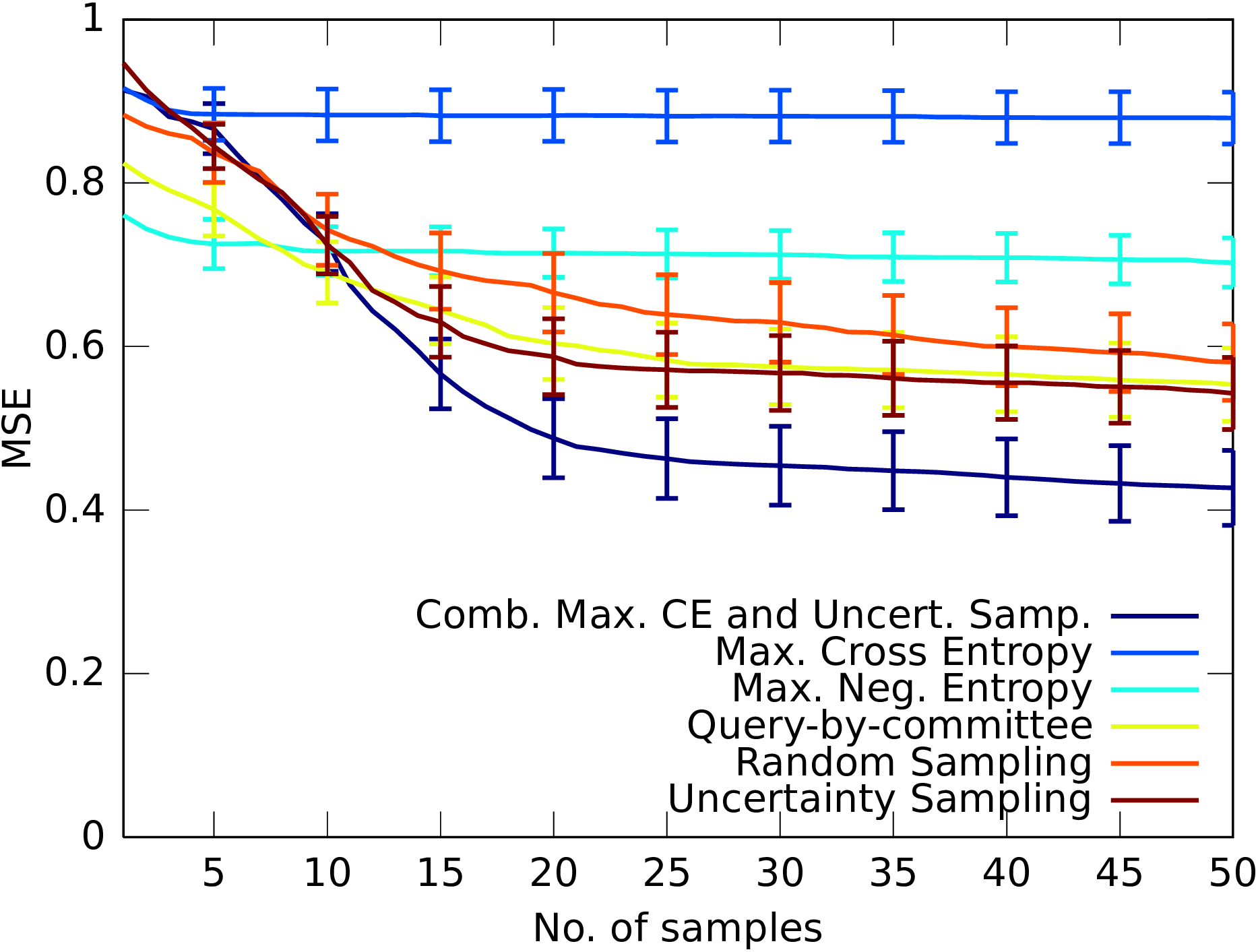}
  \label{fig:mean_performance_acc2}
  }
  \\

  \subfigure[Classification Model Entropy] {
  \includegraphics[width=\figureScaling\columnwidth]{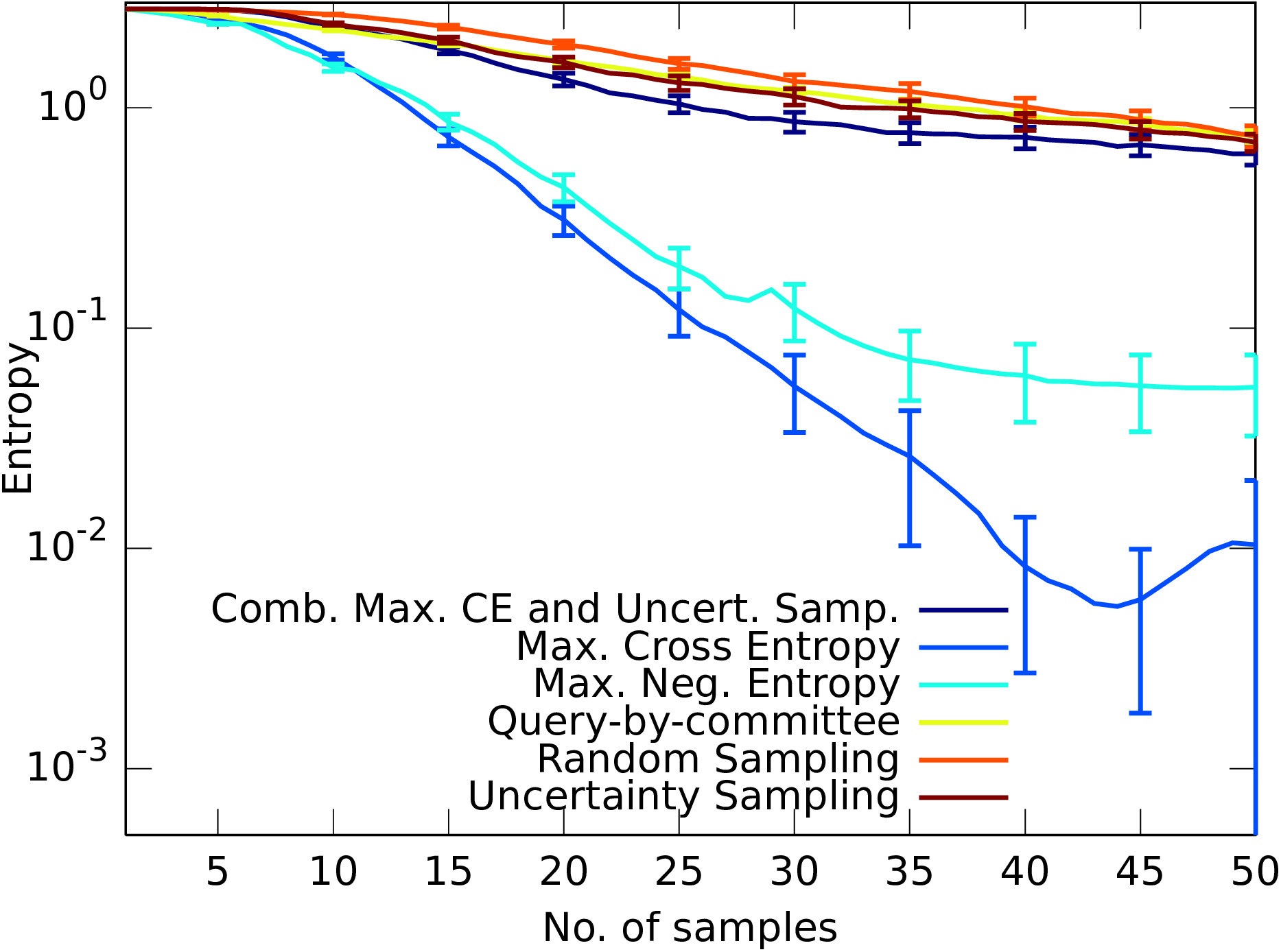}
  \label{fig:mean_performance_ent1}
  }
  \subfigure[Classification Accuracy] {
  \includegraphics[width=\figureScaling\columnwidth]{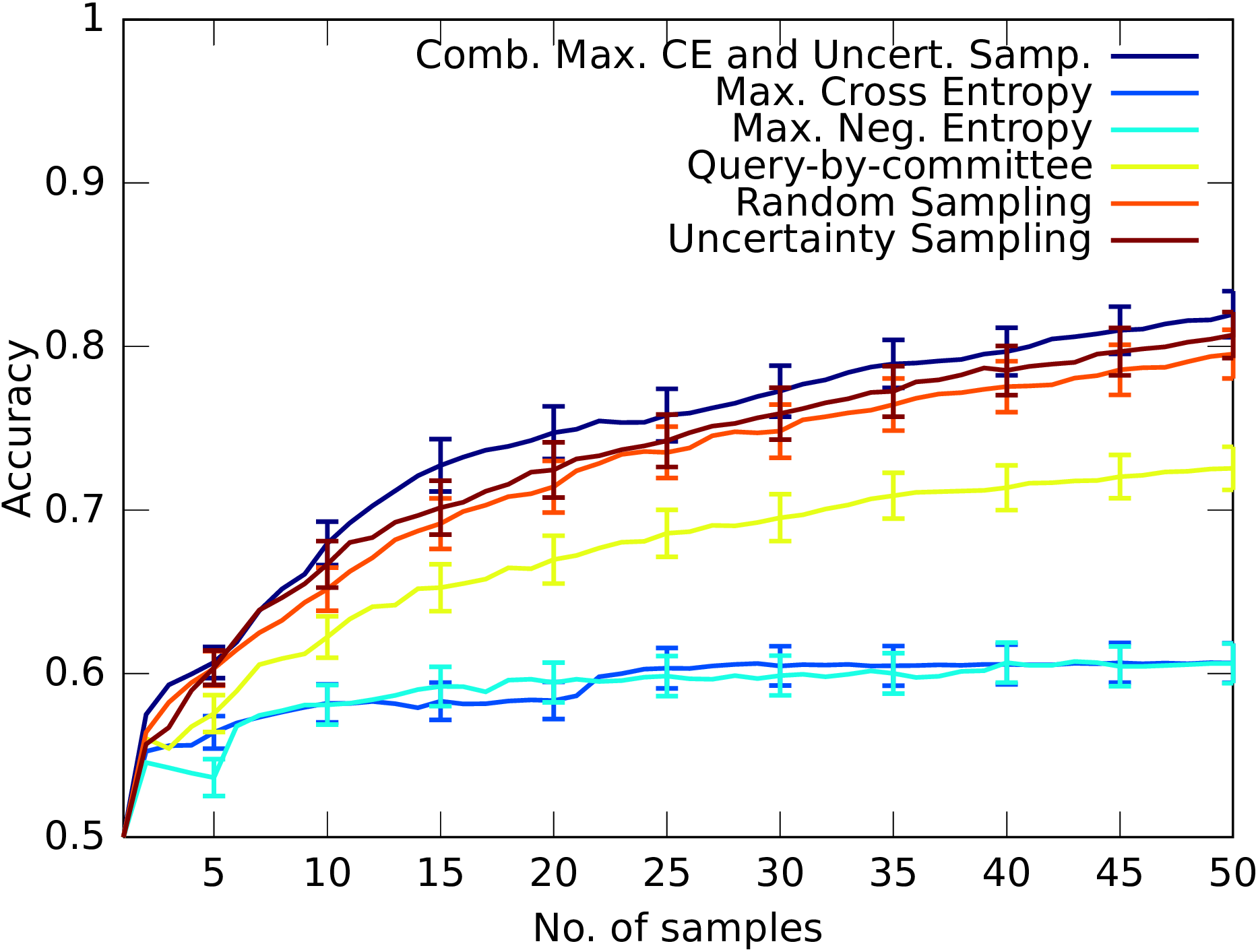}
  \label{fig:mean_performance_acc1}
  }
  \\
  \subfigure[CT Slices Model Entropy] {
  \includegraphics[width=\figureScaling\columnwidth]{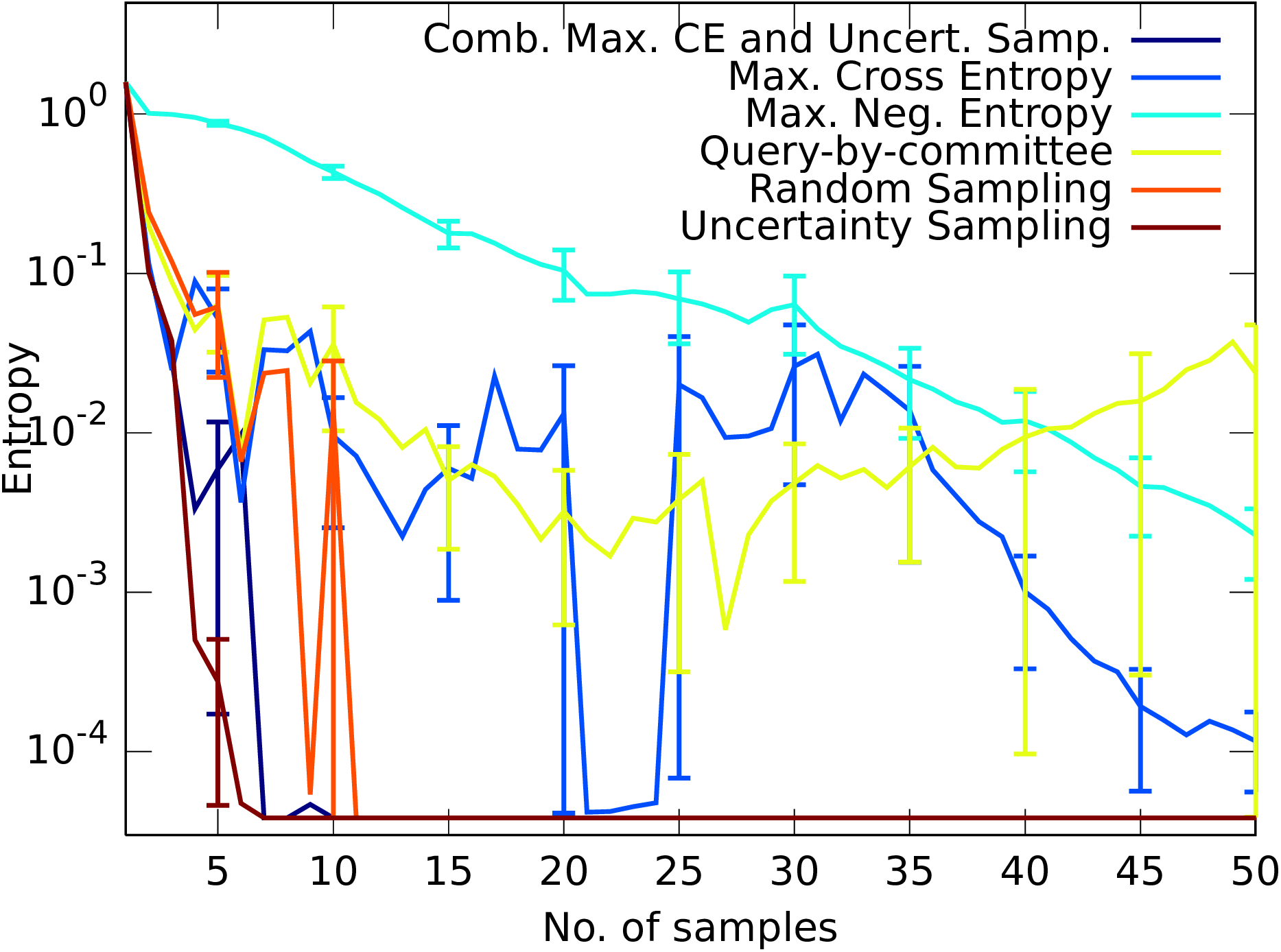}
  \label{fig:mean_performance_ent3}
  }
  \subfigure[CT Slices Error] {
  \includegraphics[width=\figureScaling\columnwidth]{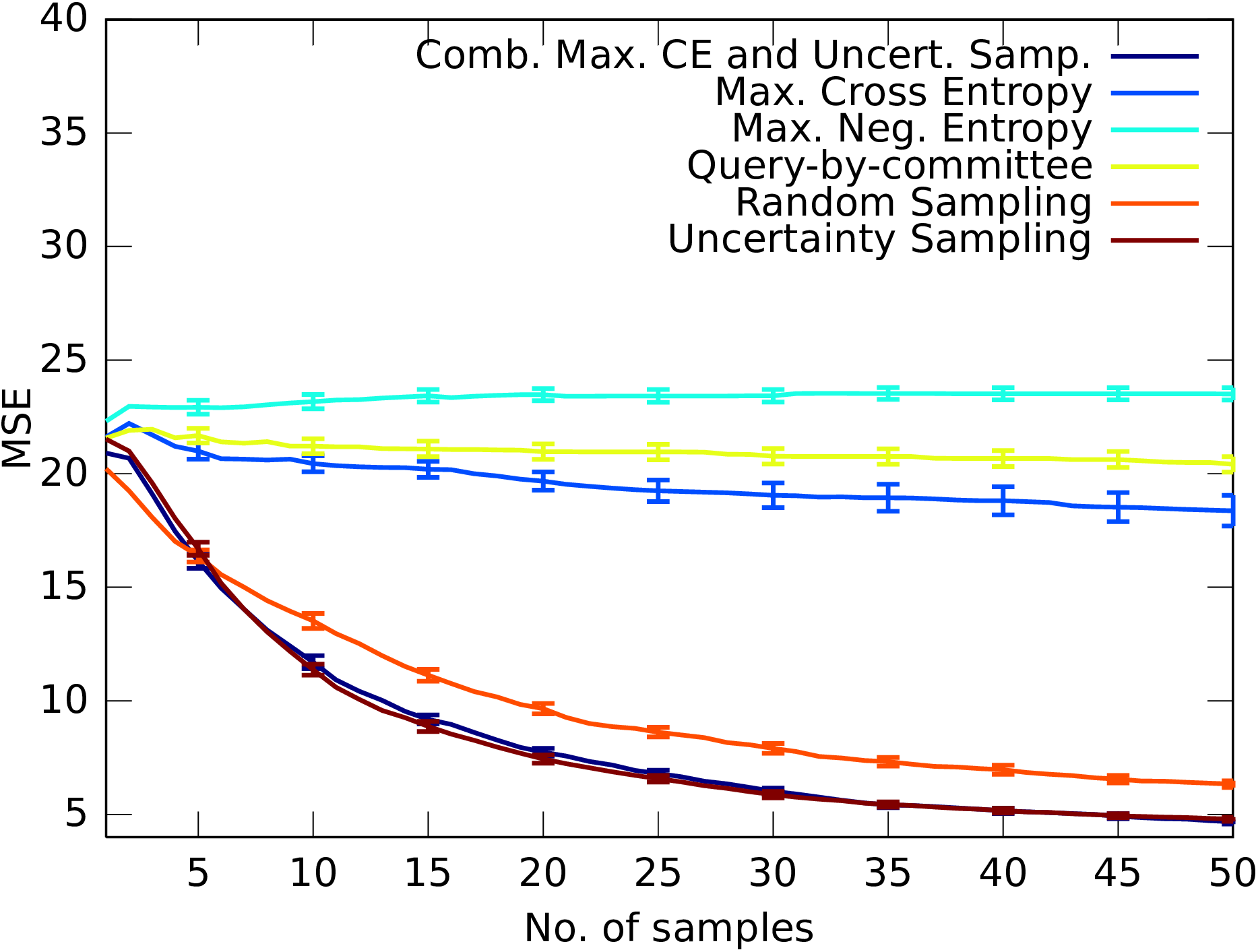}
  \label{fig:mean_performance_acc3}
  }
  \caption{ The mean performance of the different methods for the classification
  tand regression tasks.}
  \label{fig:mean_performance}
\end{figure*}

\subsection{Measures}

To measure progress in discriminating between hypotheses we computed the entropy
of the posterior hypotheses
belief for each method. To measure progress in the predictive performance we
plot the classification accuracy and the mean squared error for classification
and regression, respectively. To compute an overall predictive performance for a
method we took the weighted average over the different models, with the
posterior probabilities as weights. This corresponds to the maximum a posteriori
estimate of the marginal prediction
\begin{align}
  p(y|D,X) = \sum_\theta p(\theta|D)~ p(y|\theta,D,X) ~.
\end{align}
Fig.~\ref{fig:mean_performance} show these measures for all our experiments.

\subsection{Synthetic Data}

We tested our method in both a 3D-regression and a 3D-classification
task. The setup for both experiments was essentially the same: A
ground truth Gaussian Process (GP) was used to generate the data. The
kernel of the ground truth GP was randomly chosen to depend either on
all three dimensions $(x, y, z)$, only a subset of two dimensions $(x,
y)$, $(y, z)$ or $(x, z)$, or on only one dimension $(x)$, $(y)$ or
$(z)$. Finding the correct hypothesis in this case corresponds to a feature selection
problem: uncovering on which features the unknown true GP depends
on. The latent variable $\theta$, to be uncovered by the active learning
strategies, enumerates exactly those seven possibilities. One run
consisted of each method independently choosing fifty queries
one-by-one from the same ground truth model. After each query the
corresponding candidate GP was updated and the hypotheses posterior was
computed.

Fig.~\ref{fig:mean_performance_ent2}, \ref{fig:mean_performance_acc2},
\ref{fig:mean_performance_ent1} and \ref{fig:mean_performance_acc1} show the mean
performance over 100 runs of the synthetic classification and
regression tasks, respectively. Since we average over 100 runs, the
error bars of the mean estimators are very small. Both hypotheses belief entropy
and accuracy/mean squared error are shown.

On this synthetic data \MaxCE{} significantly outperforms all other
tested methods in terms of entropy, followed by Bayesian
experimental design, and the mixture of \MaxCE{} and uncertainty
sampling (\figref{fig:mean_performance_ent2}
and~\ref{fig:mean_performance_ent1}). As expected, In terms of
classification accuracy and predictive error both \MaxCE{} and
Bayesian experimental design perform poorly. This is because their
objectives are not designed for prediction but for hypothesis discrimination.
However, the mixture of \MaxCE{} and uncertainty sampling,
performs best (\figref{fig:mean_performance_acc2}
and~\ref{fig:mean_performance_acc1}), which is presumably due to its
capability to uncover the correct hypothesis quickly.

\subsection{CT-Slice Data}

We also tested our methods on a high dimensional (384 dimensions) real
world data set from the machine learning repository of the University
of California, Irvine \citep{uci-ml-repo}. The task on this set is to
find the relative position of a computer tomography (CT) slice in the
human body based on two histograms measuring the position of bone (240
dimensions) and gas (144 dimensions). We used three GPs with three
different kernels: a $\gamma$-exponential kernel with $\gamma = 0.4$,
an exponential kernel, and a squared exponential kernel.  Although
obviously none of these processes generated the data we try to find
the best matching process alongside with a good regression result.
Fig.~\ref{fig:mean_performance_ent3} and \ref{fig:mean_performance_acc3}
show the mean performance over 40 runs on the CT slice data set.

In the CT slice data set neither \MaxCE{} nor Bayesian experimental design minimize the entropy quickly
(\figref{fig:mean_performance_ent3}). This may be a consequence of the
true model \emph{not} being among the available alternatives. As a
consequence both methods continuously challenge the belief thereby
preventing it from converging. QBC may be subject to the same
struggle, here even resulting in an increase of entropy after the
first 25 samples. In contrast for uncertainty sampling, our mixture
method, and random sampling the entropy converges reliably. Concerning
the predictive performance \MaxCE{}, Bayesian experimental design, and
QBC do not improve noticeably over time (cf.~explanation above). Again
uncertainty sampling and our mixture method perform much better, while
here the difference between them is not significant.

\section{Robot Experiment: Joint Dependency Structure Learning}
\label{sec:robot-experiment}

In another experiment we used the \MaxCE{} method to uncover the structure of
dependencies between different joints in the environment of a robot.
Consider a robot entering an unknown room. To solve tasks successfully it is
necessary to explore the environment for joints that are controllable by the
robot, such that it is able to e.g. open drawers, push buttons or unlock a door.
In earlier work we have shown how such exploration can be driven by information
theoretic measures \citep{otte-et-al:14-iros}.
Many joints are however dependent
on each other, such as keys can lock cupboards or handles need to be pressed
before a door can be opened.
We modeled these dependencies with a
probabilistic model that captures the insight that many real world mechanisms
are equipped with some sort of feedback, as for instance a force raster or
click-sounds that support the use of the mechanisms to find dependencies more
quickly \citep{kulick-et-al:15-icra}. For the details on the model of feedback
we refer to that publication. Here we show a simplified model, necessary to
follow the introduction of \MaxCE{} in this context.
Fig.~\ref{fig:jds-model} shows the simplified graphical model.

\begin{figure}[th]
  \centering
  \includegraphics[width=.25\textwidth]{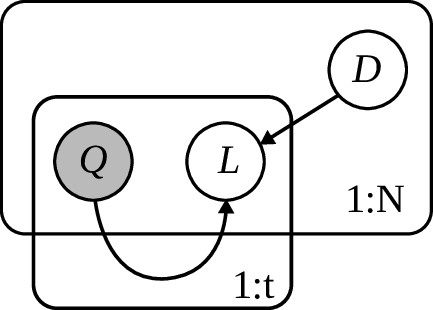}
  \caption{A simplified version of the graphical model from
  \cite{kulick-et-al:15-icra}, omitting the feedback of the explored object. The
  latent distribution to be learned is $P(D^j)$, which captures the dependency
  structure as discrete distribution.}
  \label{fig:jds-model}
\end{figure}

Consider an environment with $N$ joints, where each joint might be locked or
unlocked over time. The locking state might be dependent on the position of the
other joints. Let $Q^j_t$, $L^j_t$ and $D^j$ be random variables. $Q^j_t$ is the
joint state of the $j$-th joint in the environment at time $t$. $L^j_t$ is the
locking state of the $j$-th joint at time $t$ and $D^j$ is the dependency
structure of the $j$-th joint. $D^j$ is a discrete variable with domain $\{1,
\dots, N+1\}$. The $i$-th outcome indicates that joint $j$ is dependent on $i$,
whereas the last outcome indicates that the joint is independent from other
joints.

We now want to uncover the dependency structure of all joints. Thus we want to
know the distribution of all $D^j$. $D^j$ here is the latent variable we want to
gather information about (called
$\theta$ throughout the former parts of the paper). $Q^j_t$ and $L^j_t$ are the data
observed so far (i.e. $x$ and $y$ respectively). If we want to use \MaxCE{}
to learn about the dependency structure we need to compute the expected one-step
cross entropy between the current joint dependency structure $P_{D^j_t}$ and the
expected joint dependency structure distribution one time step ahead
$P_{D^j_{t+1}}$ and maximize this expectation to get the optimal next sample
position, corresponding directly to Eq.~(\ref{eq:expected_cross_entropy}):

\begin{align}
\label{eq:next_action}
  ({Q^{1:N}_{t+1}}^*, j) =
  \argmax_{(Q^{1:N}_{t+1}, j)} \sum_{L^{j}_{t+1}}
  P\left(L^{j}_{t+1}|Q^{1:N}_{t+1}\right)~\cdot H\left[ P_{D^j_{t}};P_{D^j_{t+1}} \right]
\end{align}
with
\begin{align}
  P_{D^j_{t}}   &= P\left(D^{j}|L^{j}_{1:t}, Q^{1:N}_{1:t}\right)\\
  P_{D^j_{t+1}} &= P\left(D^{j}|L^{j}_{1:t+1}, Q^{1:N}_{1:t+1}\right).
\end{align}

We conducted two versions of this experiment. A quantitative, but simulated
version of the experiment and second a qualitative real-world experiment on a
PR2 robot (see Fig.~\ref{fig:real-world}). In the simulated version the agent is
presented an environment with three randomly instantiated furnitures as
described in Table~\ref{tab:furniture}. In the real world experiment the PR2
robot has to uncover that a key is locking the drawer of an office cabinet.

\begin{figure}[th]
  \centering
  \includegraphics[width=.5\textwidth]{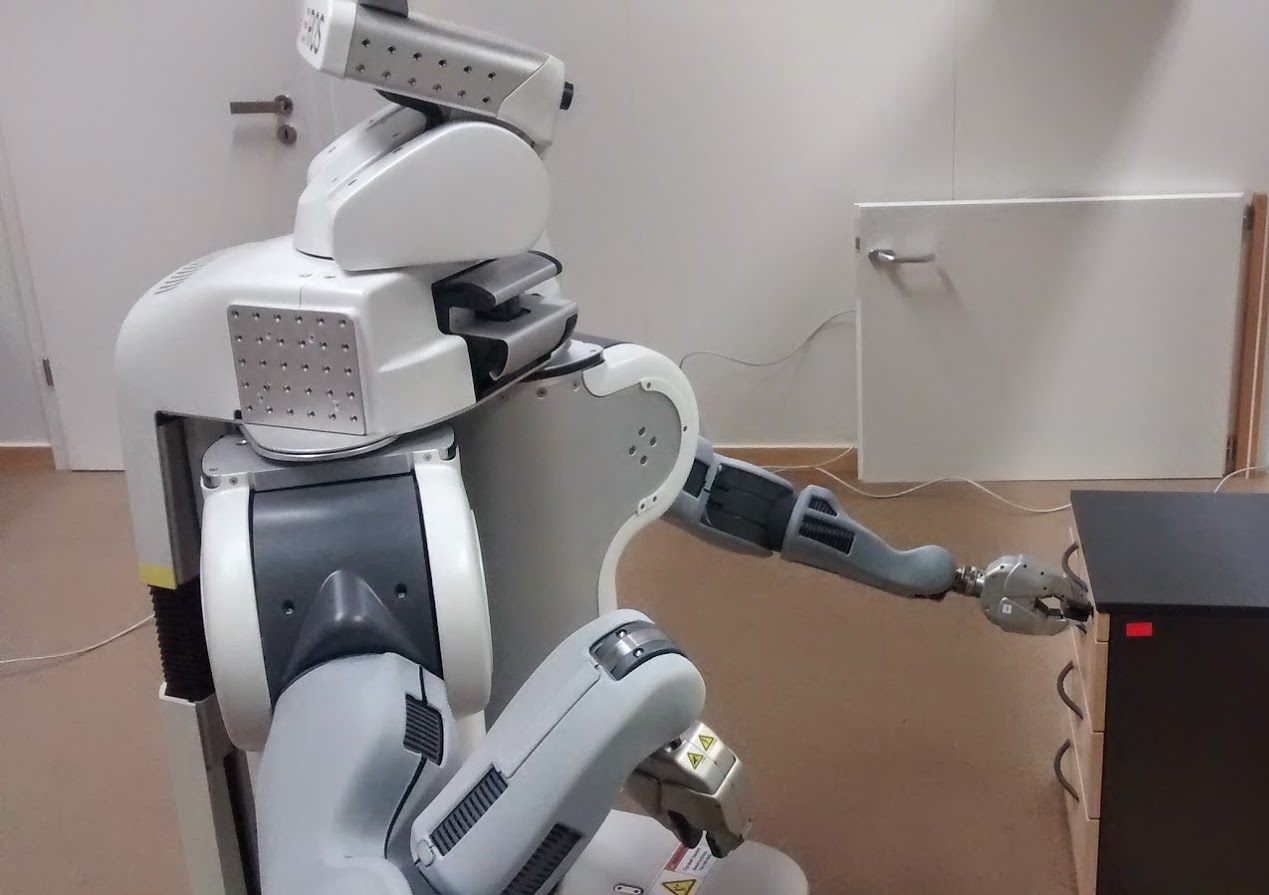}
  \caption{A PR2 robot tries to uncover the dependency structure of a typical
  office cabinet by exploring the joint space of the key and the drawer.}
  \label{fig:real-world}
\end{figure}

\begin{table}[ht]
  \centering
  \begin{tabularx}{\textwidth}{@{}lXX@{}}
    \hline
    \textbf{Name}        & \textbf{Description}                                &
    \textbf{Locking mechanism}\\ \hline
    Cupboard with handle & A cupboard with a door and a handle attached to it. & The handle must be at upper or lower limit to unlock the door \\
    Cupboard with lock   & A cupboard with a door and a key in a lock          & The key must be in a particular position to unlock the door \\
    Drawer with handle   & A drawer with a movable handle                      & The handle must be at upper or lower limit to unlock the drawer \\
    Drawer with lock     & A drawer with a key in a lock                       & The key must be in a particular position to unlock the drawer \\
    \hline
  \end{tabularx}
  \caption{Furniture used in the simulation.}
  \label{tab:furniture}
\end{table}

\subsection{Actions and Observations}

The robot can directly observe the joint state of all joint over time and can
move the joints to a desired position. At a given position the robot can ask an
oracle about the locking state of one joint.

\subsection{Prior}

For the dependency distribution $P(D^j)$ we choose the following prior:
\begin{align}
  P(D^j = i) =
  \begin{cases}
    0                    & \mathrm{~if~} i=j \quad \textrm{(self-dependent)}\\
    .7                   & \mathrm{~if~} i=N+1 \quad \textrm{(independent)}\\
    \frac{1}{d(i,j) c_N} & \mathrm{~else~} \quad \textrm{($j$ depends on $i$)}
  \end{cases}
\end{align}
with $d(i, j)$ being the (euclidean) distance between joint $i$ and $j$ and
$c_N$ being a normalization constant. This captures our intuition that most
joints are movable independently from the state of other joints, e.g. most
joints are not lockable etc. Additionally it models our knowledge that joints
that can lock each other are often close to each other. The hard zero prior for
self-dependence rules out the possibility of a joint locking itself.

\subsection{Results and Discussion of the Simulated Experiment}

We tested the \MaxCE{} method, expected neg.~entropy and a random strategy, each
50 times. As results we show in Fig.~\ref{fig:sim_results_entropy} two things.
First the sum of the entropies of all $P(D^j)$. Here one can see that only
\MaxCE{} is able to decrease the entropy significantly. As expected random
apparently performs worse than neg.~expected entropy. But this is a wrong
conclusion: In the second plot we show how many dependencies are classified
correctly, if we apply an (arbitrary) decision boundary at $0.5$. Neg.~expected
entropy is not able to classify anything correct, but the three independent
joints, which are already covered by the prior $P(D^j)$ (whereas random is able
to slowly uncover other joint dependencies). The strong prior---which arguably
is a reasonable one---let the classical Bayesian experimental design strategy
pick queries that do not uncover the true distribution but stays at the local
minimum.
The entropy increases in the random strategy, since the prior is already a
strong belief. So during the exploration of the joints the entropy first
increases and only later decreases again. The neg.~entropy strategy on the other
hand does not change the belief and thus keeps a lower entropy. Note that after
the first step also the cross entropy criterion has slightly increased the
entropy and only after three observations drop below the neg.~entropy strategy.

\begin{figure}[t]
  \centering
  \includegraphics[width=\textwidth]{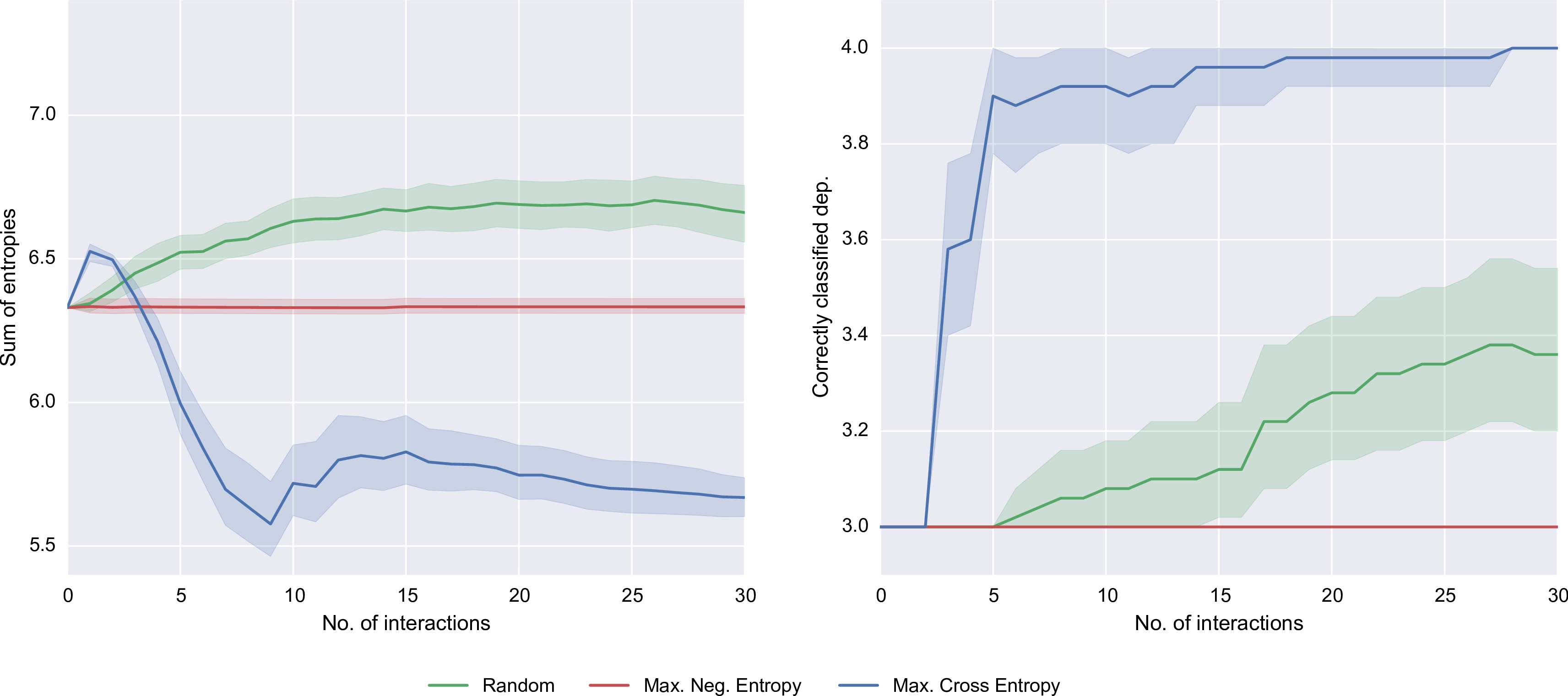}
  \caption{Results of simulation experiments. Left we show the sum of entropies
  over all dependency beliefs. Right we show the mean correctly classified
  joints with an arbitrary decision boundary at $0.5$. (Similar figure as in
  \citep{kulick-et-al:15-icra}.)}
  \label{fig:sim_results_entropy}
\end{figure}

\subsection{Results and Discussion of the Real World Experiment}

In the real world experiment we let the PR2 robot explore the office cabinet
with the \MaxCE{} strategy. It could identify the correct dependency structure
after a few interactions. We show the two $P(D^j)$ distributions in
Fig.~\ref{fig:robot_results}. Notably the distribution of the independent joint
doesn't change. This comes from the fact that the robot can find no strong evidence of
independence, as long as it does not have covered the whole joint space of the
other joint. To understand this note that the locking state from the key never
changes, i.e. it is always movable. So there is no evidence against the
possibility of a dependency from the drawer to the key, since there might be a
position of the drawer which locks the key. Only if the agent has seen every
possible state of the drawer it can be sure that the key is independent.
Since only a handful of drawer states are observed, the prior distribution
almost preserves during the whole experiment.

\begin{figure}[t]
  \centering
  \includegraphics[width=\textwidth]{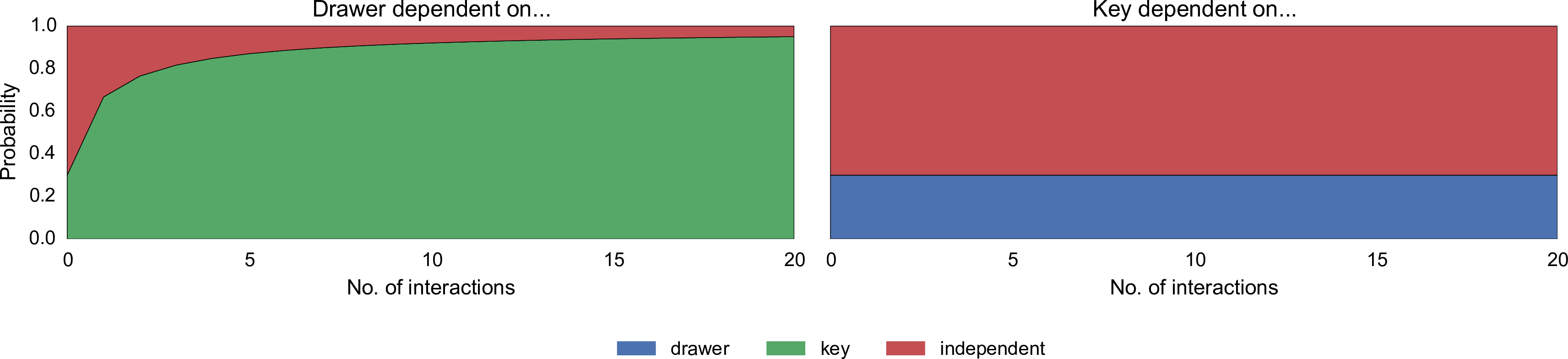}
  \caption{Results from the real world experiment. We show the belief over the
  dependency structure of both joints of the drawer. (Figure as in
  \citep{kulick-et-al:15-icra}.)}
  \label{fig:robot_results}
\end{figure}
\section{Conclusion and Outlook}
\label{sec:conclusion_and_outlook}

The presented results strongly suggest that our newly developed strategy of
maximizing the expected cross entropy is superior to classical Bayesian
experimental design for uncovering latent parameters in an iterative setting.

The results on predictive performance additionally demonstrate a successful
application of \MaxCE{} for prediction by mixing it with an uncertainty sampling
objective. The resulting objective at the same time actively learns the latent
parameters and accurate predictions. This initially goes at the expense of
accurate predictions but at some point more than compensates this
fall-back. This might be the case, because this way areas which are important
for false model hypothesis can be ignored and thus the right model is trained
better.

So far our mixing strategy is rather simple. But the results suggest that
the mixing helps. Investigating better mixing strategies might lead to more
improvements.

So far we only investigated the discrete case of $k$ distinct models. The same
techniques described in this paper may be useful to find samples to optimize
continuous hyper parameter. In this case the sum over models will become an
integral and efficient integration techniques need to be applied to the method
to keep it computationally tractable. It also might be applicable to leverage
the insight of \cite{ko-et-al:95-ops} that the entropy is submodular to
implement efficient approximations of the optimization.

Another direction of research would involve finding better optimization
techniques to find the actual maxima to up the process. When using GPs all
involved distributions are Gaussian (or approximated by Gaussians for the
classification case). As such they are infinitely differentiable, so higher
order methods might prove useful.

\section*{Acknowledgments}

The CT slices database was kindly provided by the UCI machine learning
repository \citep{uci-ml-repo}.  We thank Stefan Otte for help with the robot
experiments. Johannes Kulick was funded by the German Research Foundation (DFG,
grant TO409/9-1) within the priority programm ``Autonomous learning'' (SPP1597).
Robert Lieck was funded by the German National Academic Foundation.

\bibliography{references}
\bibliographystyle{plainnat}

\appendix
\section{Expected Kullback-Leibler divergence transformations}
\label{sec:dkl-transformations}

The KL-divergence, entropy, and cross-entropy of two distributions
$p$ and $q$ are closely related (rows in \eqref{eq:lJ4r81Nsga}) and can be
rewritten as expectation values (columns in \eqref{eq:lJ4r81Nsga})
\begin{align}
  \begin{array}{rl@{}ll@{~}l}
    \Dklbig{p(x)}{q(x)}
    &=&\HHbig{p(x),q(x)}
    &-&\HHbig{p(x)} \\[2mm]
    \makebox[30mm]{\rotatebox{90}{=}}
    && \makebox[10mm]{\rotatebox{90}{=}}
        && \makebox[10mm]{\rotatebox{90}{=}} \\[2mm]
    \displaystyle{\int p(x) \log \frac{p(x)}{q(x)} dx}
    &=-&\displaystyle{\int p(x) \log q(x)}
    &+&\displaystyle{\int p(x) \log p(x)} \\[2mm]
    \makebox[30mm]{\rotatebox{90}{=}}
    && \makebox[10mm]{\rotatebox{90}{=}}
        && \makebox[10mm]{\rotatebox{90}{=}} \\[2mm]
    \displaystyle{\eexpectBig{\log \frac{p(x)}{q(x)}}_{p(x)}}
    &=-&\displaystyle{\eexpectBig{\log q(x)}_{p(x)}}
    &+&\displaystyle{\eexpectBig{\log p(x)}_{p(x)}}~.
  \end{array}
        \label{eq:lJ4r81Nsga}
\end{align}
When taking the expectation of the KL-divergence over $p(y|x,D)$, depending on
the direction of the KL-divergence, either the entropy or the cross-entropy term
is constant with respect to $x$ (and therefore drops out when taking the
$\argmax_{x}$)
\begin{align}
    \eexpectBig{\Dklbig{&p(\theta|y,x,D)}{p(\theta|D)}}_{p(y|x,D)} = \nonumber \\
    &=
    - \eexpectBig{\HHbig{p(\theta|y,x,D)}}_{p(y|x,D)}
    + \eexpectBig{\HHbig{p(\theta|y,x,D),p(\theta|D)}}_{p(y|x,D)} \\
    &=
    - \eexpectBig{\HHbig{p(\theta|y,x,D)}}_{p(y|x,D)}
    - \eexpectBig{\log p(\theta|D)}_{p(\theta|y,x,D),p(y|x,D)} \label{eq:cNzwSeyFpR} \\
    &=
    - \eexpectBig{\HHbig{p(\theta|y,x,D)}}_{p(y|x,D)}
    - \eexpectBig{\log p(\theta|D)}_{p(\theta|D)} \label{eq:hNxZxFjo94} \\
    &=
    - \eexpectBig{\HHbig{p(\theta|y,x,D)}}_{p(y|x,D)}
    - \underbrace{\HHbig{p(\theta|D)}}_{const.} \\
  \eexpectBig{\Dklbig{&p(\theta|D)}{p(\theta|y,x,D)}}_{p(y|x,D)} = \nonumber \\
    &=
      \eexpectBig{\HHbig{p(\theta|D),p(\theta|y,x,D)}}_{p(y|x,D)}
      - \eexpectBig{\HHbig{p(\theta|D)}}_{p(y|x,D)} \\
    &=
      \eexpectBig{\HHbig{p(\theta|D),p(\theta|y,x,D)}}_{p(y|x,D)}
      - \underbrace{\HHbig{p(\theta|D)}}_{const.}~.
\end{align}
For the step from \eqref{eq:cNzwSeyFpR} to \eqref{eq:hNxZxFjo94}, note that
$p(\theta|x,D)=p(\theta|D)$ since $\theta$ is independent of $x$ so that for any function
$f(\theta,D)$ that depends only on $\theta$ and $D$, such as $\log p(\theta|D)$ above, an
expectation over $p(\theta|y,x,D)$ and $p(y|x,D)$ is equal to an expectation over just
$p(\theta|D)$
\begin{align}
  \eexpectBig{f(\theta,D)}_{p(\theta|y,x,D),p(y|x,D)}
  &= \iint_{y,\theta} f(\theta,D) \, p(\theta|y,x,D) \, p(y|x,D) \, dy \, d\theta \\
  &= \int_{\theta} f(\theta,D) \bigg[ \int_{y} \, p(\theta,y|x,D) \, dy \bigg] d\theta \\
  &= \int_{\theta} f(\theta,D) \, p(\theta|x,D) \, d\theta \\
  &= \int_{\theta} f(\theta,D) \, p(\theta|D) \, d\theta \\
  &= \eexpectBig{f(\theta,D)}_{p(\theta|D)}~.
\end{align}

\section{Conditional Entropy Is Not Submodular}
\label{sec:conditional_entropy_is_not_submodular}

\begin{definition}
For a set $\Omega$, the set function $f : 2^\Omega \rightarrow \mathbb{R}$ is submodular if and only if
\begin{align}
  f\big(D \cup \{y_1\}\big) + f\big(D \cup \{y_2\}\big) &\geq f\big(D \cup \{y_1, y_2\}\big) + f\big(D\big) 
                                                          \label{eq:submodularity}
\end{align}
with $D\subset\Omega$ and $y_1,y_2\in\Omega\setminus D$.
\end{definition}

\begin{definition}
  For a random variable $\theta$ and a set of random variables $Y$,
  \begin{align}
    H(\theta|Y) = \sum_{Y} p(Y) H[p(\theta|Y)]
  \end{align}
is the conditional entropy.
\end{definition}

\begin{lemma}
  $f: 2^\Omega \rightarrow \mathbb{R}$ with $f(Y) = H(\theta|Y)$ is not submodular.
\end{lemma}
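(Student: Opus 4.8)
The plan is to disprove submodularity by constructing an explicit counterexample: a small joint distribution over $\theta$ and two binary observation variables $y_1, y_2$, with $D = \emptyset$, for which the inequality \eqref{eq:submodularity} fails, i.e.\ for which
\begin{align}
  H(\theta \mid y_1) + H(\theta \mid y_2) < H(\theta \mid y_1, y_2) + H(\theta)\,.
\end{align}
Since the conditional entropy is monotone (each term on the left is $\le H(\theta)$ and the joint-conditioned term is $\le$ each single-conditioned term), the only way to violate submodularity is to exploit \emph{synergy}: neither $y_1$ nor $y_2$ alone tells you much about $\theta$, but the pair together pins $\theta$ down. The canonical gadget for this is a parity-like construction, so I would take $\theta, y_1 \in \{0,1\}$ uniform and independent, and set $y_2 = \theta \oplus y_1$ (XOR), possibly with a small amount of noise or a slightly non-uniform prior to keep all the conditional distributions generic.

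The key steps, in order, are: (1) fix the joint distribution $p(\theta, y_1, y_2)$ explicitly as a table; (2) compute the marginals and the four quantities $H(\theta)$, $H(\theta\mid y_1)$, $H(\theta\mid y_2)$, $H(\theta\mid y_1,y_2)$ directly from the definition $H(\theta\mid Y) = \sum_Y p(Y)\,H[p(\theta\mid Y)]$; (3) observe that in the noiseless XOR case $H(\theta) = H(\theta\mid y_1) = H(\theta\mid y_2) = \log 2$ while $H(\theta\mid y_1,y_2) = 0$, so the left side is $2\log 2$ and the right side is $\log 2$, and the inequality $2\log 2 \ge \log 2$ is in the \emph{wrong} direction for a violation --- so I must instead check the submodularity inequality with the roles as written, where it reads $2\log 2 \ge \log 2 + 0$, which actually \emph{holds}. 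This is the subtlety: plain XOR on a uniform prior satisfies the inequality. (4) Therefore I would perturb: make the prior on $\theta$ non-uniform, or add a channel so that $y_1$ individually is slightly informative, arranging that $H(\theta\mid y_1) + H(\theta\mid y_2)$ drops below $H(\theta\mid y_1,y_2) + H(\theta)$; then verify the strict inequality numerically and present the rounded entropy values.

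The main obstacle is exactly the sign/direction issue flagged in step (3): because conditioning never increases entropy, a \emph{decoupled} example can never violate submodularity, and a pure parity example lands on the boundary rather than strictly violating it. The real work is choosing the perturbation so that the two single-observation terms together undershoot; concretely I expect this requires $y_1$ and $y_2$ to be \emph{anti-synergistic in their marginal informativeness but synergistic jointly}, e.g.\ each $y_i$ reveals $\theta$ only in combination, while a biased prior keeps $H(\theta)$ low enough that $H(\theta) + H(\theta\mid y_1,y_2)$ is easy to exceed. Once the table is fixed, the remaining computation is a short finite sum of $-p\log p$ terms, which I would just evaluate and tabulate rather than manipulate symbolically, and conclude that the inequality \eqref{eq:submodularity} fails for $f(Y) = H(\theta\mid Y)$, proving the lemma.
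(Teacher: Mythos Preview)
Your proposal has a genuine conceptual gap: the intuition that a violation requires \emph{synergy} between $y_1$ and $y_2$ is backwards. Rewriting the submodularity inequality $H(\theta\mid y_1)+H(\theta\mid y_2)\ge H(\theta\mid y_1,y_2)+H(\theta)$ as $H(\theta)-H(\theta\mid y_1)\le H(\theta\mid y_2)-H(\theta\mid y_1,y_2)$, i.e.\ $I(\theta;y_1)\le I(\theta;y_1\mid y_2)$, shows that a violation needs $I(\theta;y_1)>I(\theta;y_1\mid y_2)$: \emph{redundancy}, where $y_2$ already carries part of what $y_1$ would tell you about $\theta$. The XOR gadget is the extreme synergistic case ($I(\theta;y_1)=0$, $I(\theta;y_1\mid y_2)=\log 2$) and therefore sits as far as possible on the \emph{wrong} side of the boundary; the perturbations you propose in step~(4)---biasing the prior on $\theta$, making $y_1$ mildly informative---leave the inequality intact by continuity, since the gap you would have to close is $\log 2$, not $\epsilon$. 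No small modification of the parity construction will yield a counterexample.

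The paper's proof goes in the opposite direction: take $\theta$ binary with a uniform prior and let $y_1,y_2$ be two i.i.d.\ observations of $\theta$ through a symmetric binary channel with error probability $0.1$. Each observation alone cuts the entropy by more than half ($H(\theta\mid y_i)\approx 0.325$ nats versus $H(\theta)=\log 2\approx 0.693$), so
\begin{align*}
H(\theta\mid y_1)+H(\theta\mid y_2)\approx 0.650 < 0.693 = H(\theta) \le H(\theta\mid y_1,y_2)+H(\theta),
\end{align*}
and submodularity fails without even computing $H(\theta\mid y_1,y_2)$. The moral: to break submodularity of conditional entropy you want two \emph{individually} informative, mutually redundant observations, not a parity construction.
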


\begin{proof}

We proof by contradiction, giving an example that violates
\begin{align}
  H(\theta|\emptyset \cup \{y_1\}) + H(\theta|\emptyset \cup \{y_2\}) &\geq H(\theta|\emptyset \cup \{y_1,y_2\}) +
  H(\theta|\emptyset)~.
                                      \label{eq:submodularity_conditional_entropy}
\end{align}

Let $\theta$ be a binary random variable, and $y_1$ and $y_2$ be identically
distributed binary random variables with
\begin{align}
  &&&&\text{prior:}&&p(\theta) &= \left(\begin{array}{l}0.5\\0.5\end{array}\right)&&&&\\
  &&&&\text{likelihood:}&&p(y|\theta) &= \left(\begin{array}{ll}0.1&0.9\\0.9&0.1\end{array}\right)\\
  &&&&\text{marginal:}&&p(y) &= \sum_{\theta} p(y|\theta)p(\theta) = \left(\begin{array}{l}0.5\\0.5\end{array}\right)\\
  &&&&\text{posterior:}&&p(\theta|y) &= \frac{p(y|\theta)p(\theta)}{p(y)}
  = \left(\begin{array}{ll}0.1&0.9\\0.9&0.1\end{array}\right),
\end{align}
so that
\begin{align}
  H(\theta|y_1) &= H(\theta|y_2) = H(\theta|y) = - \sum_{y} p(y) \sum_{\theta} p(\theta|y) \log p(\theta|y) = 0.325, \\
  H(\theta|\emptyset) &= H(\theta) = \sum_{\theta} p(\theta) \log p(\theta) = 0.693
\end{align}
and
\begin{align}
  H(\theta|y_1) + H(\theta|y_2) = 2 \cdot 0.325
  < 0.693 = H(\theta) \leq H(\theta|y_1,y_2) + H(\theta)~,
\end{align}
which contradicts \eqref{eq:submodularity_conditional_entropy}.
\end{proof}

\end{document}